\documentclass[conference]{IEEEtran}
\IEEEoverridecommandlockouts

\usepackage{cite}
\usepackage{amsmath,amssymb,amsfonts}
\usepackage{amsthm}
\usepackage{graphicx}
\usepackage{textcomp}
\usepackage{xcolor}

\usepackage{float}
\usepackage[caption=false]{subfig} 
\usepackage[ruled,linesnumbered]{algorithm2e} 
\usepackage{multirow}
\usepackage{tabularx}
\usepackage{booktabs}
\usepackage{xcolor}
\allowdisplaybreaks[4]

\newtheorem{theorem}{Theorem}

\newtheorem{definition}{Definition}

\def\BibTeX{{\rm B\kern-.05em{\sc i\kern-.025em b}\kern-.08em
 T\kern-.1667em\lower.7ex\hbox{E}\kern-.125emX}}

\begin{document}

\title{Rethinking the Transferable Adversarial Attacks and Robust Defense in Federated Learning}

\author{\IEEEauthorblockN{Zuobin Xiong}
\IEEEauthorblockA{\textit{University of Nevada Las Vegas} \\
Las Vegas, Nevada \\
zuobin.xiong@unlv.edu}
\and
\IEEEauthorblockN{Deval Mukherjee}
\IEEEauthorblockA{\textit{University of Nevada Las Vegas} \\
Las Vegas, Nevada\\
mukhed2@unlv.nevada.edu}
\and
\IEEEauthorblockN{Homook Cho}
\IEEEauthorblockA{\textit{Cyber Security Research Center at KAIST} \\
Daejeon, South Korea\\
chmook79@kaist.ac.kr}
\and
\IEEEauthorblockN{Wei Li}
\IEEEauthorblockA{\textit{Georgia State University} \\
Atlanta, Georgia\\
wli28@gsu.edu}
}

\author{\IEEEauthorblockN{
		Zuobin Xiong\IEEEauthorrefmark{1},
		Deval Mukherjee\IEEEauthorrefmark{1},
		Homook Cho\IEEEauthorrefmark{2} and
		Wei Li\IEEEauthorrefmark{3}}
		\IEEEauthorblockA{
		\IEEEauthorrefmark{1}Department of Computer Science, University of Nevada Las Vegas, Las Vegas, USA\\
		\IEEEauthorrefmark{2}Cyber Security Research Center at KAIST,
Daejeon, South Korea\\
\IEEEauthorrefmark{3}Department of Computer Science, Georgia State University,
Atlanta, Georgia, USA\\
		\IEEEauthorrefmark{1} {\it zuobin.xiong@unlv.edu, mukhed2@unlv.nevada.edu;
		\IEEEauthorrefmark{2} {\it chmook79@kaist.ac.kr}}
		}
        \IEEEauthorrefmark{3} {\it wli28@gsu.edu}
        }

\maketitle

\begin{abstract}
The development of federated learning (FL) techniques has helped improve the privacy preservation of users' data and extended the applications of machine learning models.
However, the involvement of a large number of users in FL also creates open opportunities for different adversaries, such as poisoning attacks, byzantine attacks, and adversarial examples attacks.
Yet, recent research has disclosed that existing poisoning attacks and byzantine attacks can not achieve satisfactory penetration in realistic FL scenarios caused by strong assumptions, \textit{e.g.,} client selection rate, and the ratio of malicious attackers. 
In this paper, the transferability of adversarial examples among different client models is analyzed to understand the relation between adversarial examples and clients' data distribution.
Moreover, to mitigate the attacks of transferable adversarial examples, we design a defense mechanism stemming from the transferability of model robustness by adversarial training.
As a result, through theoretical analysis of transferability, we gain insights into adversarial examples and the vulnerability of federated learning systems. 
Our proposed adversarial attack and defense methods are evaluated via real-life datasets in various settings to show their performance over the existing state-of-the-art methods.
\end{abstract}

\begin{IEEEkeywords}
Federated Learning, Adversarial Examples, Model Robustness, Security and Privacy
\end{IEEEkeywords}

\section{Introduction}
\label{sec:intro}
Federated Learning (FL)~\cite{mcmahan2017communication} has emerged as an attractive distributed machine learning framework, where multiple private data owners (\textit{e.g.,} clients) collaboratively train a machine learning model on the central organizer (\textit{i.e.,} the service provider) without sharing their private data.
In particular, each participating client performs a local update privately and sends the updated parameter to the service provider for aggregation in an iterative manner. 
Benefited from these features, FL has been adopted by a suite of prevalent applications at scale in the real world, such as the next word prediction model from Google G-board~\cite{konevcny2016federated}, the Siri voice recognition model from Apple~\cite{paulik2021federated}, and the credit information analysis model from different financial institutions~\cite{cheng2020federated}.  
To handle the diverse real-world applications, despite the existence of many variants of FL models, the original version, known as Federated Averaging (FedAvg)-based~\cite{mcmahan2017communication} methods, remains the most widely adopted paradigm due to its concise system design and outstanding scalability.
However, ``FedAvg'' does not fully understand the underlying challenges of system heterogeneity and statistical heterogeneity in diverse environments, where the clients possess different computational power and highly skewed non-identically distributed data in practice.
An advanced personalized solution ``FedProx''~\cite{li2020federated}, extended from FedAvg, is designed to hold the merits of FedAvg but also address the challenges of systems and data heterogeneity.
In both algorithms, the primary focus is to improve the model performance in FL at scale, but they overlook the security issues behind the scenes.   

Many potential attacks may harm the system security in FL since the collaborating participants are mutually untrusted clients, \textit{e.g.,} Android/iPhone users, and different organizations.
Therefore, any client can perform malicious operations or act as an honest-but-curious client even if they follow common system requirements.
As a result, the adversarial attacks\footnote{In this paper, we mainly investigate the adversarial attacks. We acknowledge that privacy inference is another important issue in FL, but it is beyond the scope of this work.} 
in FL can be categorized into three branches:
(i) Poisoning attacks (including backdoor attacks)~\cite{fang2020local,baruch2019little,sun2022data} that aim at causing degraded model performance on poisoned models or poisoned data;
(ii) Byzantine attacks~\cite{mean, bulyan} that allure the FL training converges to `sub-optimal to utterly ineffective models';
and (iii) Evasion attacks~\cite{wang2023potent,xiong2023exact, kim2023pfeddef} where the adversary tries to invade the model's integrity with an adversarially crafted input example.
Existing literature has intensively studied various methods in (i) and (ii) regarding these attacks and their corresponding defenses, while \textbf{they make some unrealistic assumptions that are hard to meet in real-world FL applications}.
For instance, in the related works on poisoning attacks~\cite{baruch2019little, fang2020local, shejwalkar2021manipulating} and defense~\cite{chen2018draco, xiong2024appro, yin2018byzantine}, authors assume that the adversarial party has the ability to compromise at most $10\%$ to $25\%$ of all FL clients, which means the attacker may need to control over 10 million users in a real-world App like Google G-board with billions of user base~\cite{bonawitz2019towards}.
On the other hand, for Byzantine attacks and defenses, different assumptions such as i.i.d. data in FL system~\cite{blanchard2017machine, xia2019faba}, a clean dataset on server~\cite{cao2020fltrust, xie2019zeno}, and the total number of FL clients~\cite{sattler2020byzantine, pillutla2022robust} are required to launch an attack or defense, which is hardly achievable in real applications.
Besides, in the recent paper by Shejwalkar \textit{et al.}~\cite{shejwalkar2022back}, where authors conduct a comprehensive review of existing attacks on FL through analysis and evaluation, \textbf{the conclusion is that most poisoning attacks and Byzantine attacks fail to achieve the claimed attack performance in FL at scale settings even in the absence of any robust defense mechanisms.} 

These unrealistic assumptions and experimental findings therefore motivate us to explore a more operable attack -- evasion attacks, with minimal attacker knowledge and a practical threat model.
In this paper, we revisit evasion attacks in the realistic FL setting, providing findings on theoretical attack performance and robust defense against the attack.
Specifically, considering both applicability and feasibility in real-world applications, ``FedProx'' paradigm is selected as the representative base case for analysis.
Built on which, exploring and rethinking the relationship between adversarial examples and their transferability in the ``FedProx'' FL system is our main target.
Our theoretical result is promising, where we find that the transferability of adversarial examples in the FL system can be harnessed by the local private data distribution of the malicious client. 
In addition to that, an efficient and robust defense mechanism is proposed to defend against adversarial examples in the FL system.

In summary, this work makes the following contributions:
\begin{itemize}
	\item We study the evasion attacks in FL at a realistic scenario, which is the first work to analyze the transferability of adversarial examples in FL through a theoretical manner, to the best of our knowledge.
	\item We design a robust defense framework in federated learning against adversarial example attacks, which has both efficiency and superior performance.
	\item Extensive experiments are conducted on popular FL datasets, reflecting our theoretical analysis on the transferability of adversarial examples and justifying the defense performance of our robust defense method.
\end{itemize} 

The remainder of this paper is organized as follows.
The related works in adversarial attacks and defenses are introduced in Section~\ref{sec:related_works}.
Then, the system settings and threat model are formulated in Section~\ref{sec:system_model}, followed by the analysis of attack transferability in Section~\ref{sec:attack}.
Next, the robust defense mechanism is proposed in Section~\ref{sec:methodology}, and extensive evaluations are conducted in attack and defense scenarios in Section~\ref{sec:exp}.
Finally, the paper is concluded in Section~\ref{sec:con}

\section{Related Works}
\label{sec:related_works}

\subsection{Adversarial Attacks in FL}
In FL, adversarial attacks can stem from clients, the central server, or the communication channel. 
This paper focuses on the client as the attack source.
As insiders participating in the training phase, clients can launch causative attacks affecting model performance during both training and inference stages. 
In the training stage, poisoning and backdoor attacks can be launched by a malicious participant through submitting poisonous data or tampering with model updates during training that skew the model’s predictions~\cite{fang2020local,baruch2019little,sun2022data}.
On the contrary, evasion attacks~\cite{wang2023potent,kim2023pfeddef}, can manipulate the model’s predictions by crafting an adversarial input as test data during inference.
The target of adversarial attacks can also be the clients' privacy information, where attackers can use learning based methods to reconstruct private client data using gradient-based data or extract label information~\cite{hitaj2017deep,fu2022label}.

The impact of adversarial attacks in FL may vary. 
For utility-centric attacks, it is measured as the reduction in the test accuracy of the model after the attack, usually quantified by the accuracy drop as the attack success rate.
For privacy-centric attacks, attackers aim to perform membership/label inference or model inversion, and the impact can have different metrics like mean squared error (MSE), peak signal-to-noise ratio (PSNR), structural similarity index (SSIM), and learned perceptual image patch similarity (LPIPS). 
In this work, our main focus is on the evasion attack in FL with theoretical analysis, which has not been investigated in previous works, to the best of our knowledge.

\subsection{Robust Defenses in FL}
In response to adversarial attacks in FL, various mechanisms have been proposed to build safe and robust FL systems.
In this section, we only list those representatives in enhancing robustness while omitting the privacy-preserving techniques, as they are related to our work.
PruneFL~\cite{pruningFL} and Network Pruning~\cite{netprune} are examples of pruning-based defense methods, where pruning algorithms can be applied during the aggregation or local update to remove malicious parameters generated by poisoning attacks.
Byzantine robust aggregation methods~\cite{krum, mean, bulyan} are typically implemented on the server side and filter out malicious updates and ensure that only valid updates are used in the model.
Regularization is also a defense strategy applicable to both server and client sides in FL, protecting against data poisoning and model poisoning attacks~\cite{frl}.
Specifically, on the client side, participants employ regularization techniques like dropout, batch normalization, or weight decay during their model training (LSR~\cite{LSR}, ConTre~\cite{ConTre}), which combats overfitting and contributes to overall FL robustness. 
These methods are mainly used to cure poisoning attacks, yet the investigated problem in this paper is evasion attacks.
So far, only adversarial training is an effective defense strategy against evasion, and the state-of-the-art is Federated Adversarial Training (FAT)~\cite{fat}.

However, the adversarial training framework can only work for known attacks presented in the adversarial training dataset, with a high computation cost.
Defending evasion attacks on the local client side with an efficient mechanism is still an open challenge, which motivates our matrix decomposition-based method in Section~\ref{sec:methodology}.

\section{System Setting \& Threat Model}
\label{sec:system_model}

\subsection{System Setting}
\label{sec:system}

Our system setting is adopted from~\cite{li2020federated}, in which authors devise ``FedProx'' that can handle the system heterogeneity (\textit{e.g.,} straggler caused by computing power and network connection) and statistical heterogeneity (\textit{e.g.,} different local optima caused by highly non-i.i.d. data).
Specifically, the system of FedProx consists of a central server and $K$ clients.
Each local client $k\in \{1,2,...,K\}$ holds a local dataset $D_k=\{(x_i,y_i)\}_{i=1}^{n_k}$ and performs local training algorithm to minimize the objective function $F_k(w;w^t)$, which is defined as follows.
\begin{align}
	\label{eq:local_obj}
	F_k(w;w^t)= \frac{1}{n_k} \sum_{i=1}^{n_k} [l(f(w;x_i),y_i)]+\frac{\mu}{2}\|w-w^t\|^2,
\end{align}
where the first term is the loss value of model parameter $w$ on the local dataset $D_k$, and $w^t$ in the second term is the global model parameter output by the $t$-th round server aggregation.
The optimized model parameter of client $k$ for this round is defined as $w_k^{t+1}:=\min_w F_k(w;w^t)$.
Following this, the updated aggregated model $w^{t+1}$ is constructed on the server through weighted averaging as shown in Eq.~\eqref{eq:average}
\begin{align}
	\label{eq:average}
	w^{t+1}=\sum_{k=1}^{K}p_k w_k^{t+1},
\end{align}
where $p_k\geq0$ and $\sum_{k} p_k=1$.
Generally, $p_k$ is set to be $p_k=\frac{n_k}{n}$, where $n=\sum_{k}n_k$ is the total number of data points in the FL system.
With the local objective function and the aggregation rule, the federated optimization objective of FedProx is formulated as Eq.~\eqref{eq:fedprox}
\begin{align}
	\label{eq:fedprox}
	\min_w F(w)=\mathbb{E}_k[F_k(w;w^t)]=\sum_{k=1}^{K}p_k F_k(w;w^t).
\end{align}

Regarding this system setting, we have made some commonly used assumptions to facilitate our analysis in later sections.

\begin{itemize}
	\item \textbf{(Lipschitz Continuity)} $\forall w, w' \in \mathcal{W}$, the loss function $l(f(w;x),y)$ is Lipschitz continuous with $\iota>0$~\cite{wei2020federated,hu2020differentially}:
	\begin{align}
		\|l(f(w;x),y)-l(f(w;x),y)\|\leq \iota \|w-w'\|.
	\end{align}

	\item \textbf{(Lipschitz Continuous Gradient)} $\forall w, w' \in \mathcal{W}$, the gradient of the loss function $l(f(w;x),y)$ is Lipschitz continuous with $\lambda>0$~\cite{wei2020federated, li2019convergence}:
	\begin{align}
		\|\nabla l(f(w;x),y) -\nabla l(f(w';x),y)\|\leq \lambda \|w-w'\|.
	\end{align}
\end{itemize}

\subsection{Threat Model}
\label{sec:threat_model}

In this work, the FedProx framework is considered as the cyber-range of a malicious client (\textit{referred to as ``the attacker'' hereafter}).
The \textbf{goal} of the attacker is to generate adversarial examples that can cause performance degradation (e.g., misclassification) on a targeted client (\textit{referred to as ``the victim'' hereafter}).

The attacker may be a current OR previous client in this FL system,  holding his/her local private dataset and local model parameters (i.e., model parameter $w_k^t$) as the FL training evolves.
In such a setting, the \textbf{knowledge} of the attacker is only the task of the FL system and his own data and model.
As we are in the FedProx setting, the victim's model parameters do not necessary to be the same as the attacker's, resulting in a black-box setting.

The \textbf{attack vector} of the attacker is described as follows:
(i) The attacker joined the FL system to receive the initialized model and task, and started the FL process. 
(ii) Based on the attacker's own local model, he can craft adversarial examples locally, which prevents possible detection mechanisms.
These well-functioning adversarial examples can attack the attacker's local model successfully.
(iii) Once the adversarial example is generated, the transferability of the adversarial example will be utilized as the key of an attacker to invade many victim clients with less effort.
The generated adversarial examples will be spread widely or distributed to the target victim.
When the victim uses them as inputs, the attack will be effective due to the transferability of adversarial examples among different machine learning models~\cite{guo2020backpropagating, goodfellow2014explaining}.
This attack can be implemented even when there is only one attacker, regardless of the total number of clients or the model parameter/structure of the victim client.

Indeed, the transferability of adversarial examples is mostly studied via experimental results in previous literature, and the cause of transferability is still an open question.
In this threat model, we want to not only investigate the adversarial example attack in FL at scale but also analyze the key factors that may influence the transferability of attack examples to inspire further research on attack and defense.

\section{Transferable Adversarial Attack}
\label{sec:attack}

\subsection{Adversarial Example Attack in FL}
Without loss of generality, we consider the most common $C$ classification task in FL, where the clients' models are separately trained on their private dataset $D_k=\{(x_i,y_i)\}_{i=1}^{n_k}$ and $x_i \in \mathcal{X}$ is the feature space of data while $y_i\in \mathcal{Y}=\{1,2,\dots, C\}$ is the label space.
The model of each client is denoted as a function $f(w;x): \mathcal{X} \rightarrow \mathbb{R}^{|\mathcal{Y}|}$ mapping the input data to a probability simplex, where the $c$-th element $f_c(w;x)$ in the output $f(w;x)$ represents the probability of $x$ belongs to $c$-th class.
Therefore, $\arg \max_c f_c(w;x)$ is the classified label computed by model $w$.
Also, there is a continuous loss function (usually cross-entropy loss) $l(f(w;x),y): \mathcal{X}\times \mathcal{Y} \rightarrow \mathbb{R} $ associated with each classifier, where $y$ is the ground-truth label of corresponding data $x$.
Generally, a correct classification result will have the smallest loss value, while a misclassified label can induce a larger loss.

For an attacker $A$ with classifier function $f(w_A;\cdot)$, when an input data $x$ is correctly predicted, the ground-truth label $y = \arg \max_c f_c(w;x)$.
To achieve the desired [untarget adversarial attack], an adversarial example $x'$ can be generated through different attack methods so that the crafted adversarial example $x'$ is misclassified into a different label $y'\neq y$.
Under this situation, the loss function value of $x'$ with respect to $w_A$ will increase, which means that $l(f(w_A;x'),y)$ becomes higher than the original loss $l(f(w_A;x),y)$ on clean data $x$.
In practice, maximizing the loss function value based on model parameters is the most common way of constructing adversarial examples in existing literature, such as FGSM~\cite{goodfellow2014explaining}, PGD~\cite{madry2017towards}, C\&W~\cite{carlini2017towards}, etc.
Moreover, from~\cite{szegedy2013intriguing}, we know that the imperceptible adversarial examples can sabotage other unseen models with different hyper-parameters (cross-model generalization) or trained on disjoint sub-datasets (cross-training data generalization).
This transferability of $x'$ is able to increase the loss function value of other clients in the FL system with different model parameters other than $w_A$.
In this regard, the increased loss value is very likely to push $x'$ across the decision boundary to make a misclassification result. 
Therefore, the adversarial example generated by the attacker $A$ can penetrate other benign clients even if they are personalized with non-i.i.d data in the federated setting (\textit{e.g.,} our target FedProx). 

For an individual victim client $V$ with local model parameter $w_V$, when the adversarial example $x'$ generated by the attacker $A$ is taken as an input, the classification function output $f(w_V;x')$ and loss function value $l(f(w_V;x'),y)$ will shift away from the normal level.
Since the loss functions (\textit{e.g., }cross-entropy) are always continuous and differentiable, the difference between loss function value $l(f(w_V;x'),y)$ on the victim and $l(f(w_A;x'),y)$ on the attacker can be quantified by analyzing model parameters $w_V$ and $w_A$.
If the adversarial example $x'$ works on model $w_A$ with a large loss function value, the loss value $l(f(w_V;x'),y)$ should also be large, which means the attack is transferable from the attacker to victim's model $w_V$.
Our purpose of this work is to investigate the relation between $l(f(w_V;x'),y)$ and $l(f(w_A;x'),y)$, and therefore to understand the transferability of adversarial example as well as its associated factors.

\textbf{Remark.} 
There are a wealth of attack methods~\cite{goodfellow2014explaining, madry2017towards, carlini2017towards, su2019one, xie2019improving} available to generate this adversarial example $x'$ yet they are out of the scope of our work.
In this study, we primarily focus on the transferability of this adversarial example regardless of what methods are used to generate it.

\subsection{Transferability of Adversarial Example}

In this work, we define the transferability of adversarial examples from the perspective of loss function value similarity between the attacker and the victim.
\begin{definition}
	(Transferability of Adversarial Example). 
	Given an adversarial example $x'$, a source model parameter $w_A$ used to generate $x'$, and a target model parameter $w_V$ to be attacked, the difference between loss values, $\|l(f(w_A;x'),y) - l(f(w_V;x'),y)\|$, measures the transferability of the adversarial example $x'$. 
\end{definition}
Intuitively, the smaller the distance between the two losses, the higher the chance that $x'$ can invade model $w_V$, which expresses a stronger transferability.
Next, we dive in to seek what is the underlying factors influencing the transferability of an adversarial example among different clients in the FedProx system and formulate the findings within Theorem~\ref{tm:transfer}.

\begin{theorem}
	\label{tm:transfer}
	The transferability of adversarial example $x'$ between an attacker $A$ and any victim client $V$ is related to local data distribution and given in the following equation.
    {\small
	\begin{align}
		\notag	&\|l(f(w_A;x'),y) - l(f(w_V;x'),y)\| \leq \\
        &\iota \eta  \|\sum_{c=1}^{C} p_V(y_i=c) - p_A(y_i=c)\|+ \sum_{l=0}^{m-1}((b_V)^{l} G_{max}(w_A^{mt-1-l})),
	\end{align}}
	where $p_A$ and $p_V$ are the local data distribution of the attacker and the victim, $G_{max}(\cdot)$ is the maximal value of gradient, $b_V=(1-\eta \mu + \eta \sum_{c=1}^{C} p_V(y_i=c) \gamma)$, and some constant $\iota$, $\eta$, $\gamma$ are given in common assumptions.  
\end{theorem}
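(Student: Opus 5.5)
The plan is to reduce the loss gap to a parameter gap with the Lipschitz continuity assumption, $\|l(f(w_A;x'),y)-l(f(w_V;x'),y)\|\le \iota\|w_A-w_V\|$, and then bound $\|w_A-w_V\|$ by unrolling the local FedProx updates. Both clients start round $t$ from the same global model $w^t$ and run $m$ local gradient steps with step size $\eta$ on $F_k(\cdot;w^t)$ of Eq.~\eqref{eq:local_obj}. The resulting parameters are $w_A^{mt}$ and $w_V^{mt}$, obtained after the $m$ steps $mt-m,\dots,mt-1$. The gap $\|w_A^{mt}-w_V^{mt}\|$ will then be controlled by a recursion over these $m$ steps, following the style of the FedAvg/FedProx non-i.i.d. divergence arguments in~\cite{wei2020federated,li2019convergence}.

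\textbf{Step 1: one-step recursion.} The local step is
\[
w_k^{s+1}=w_k^s-\eta\Big(\nabla \mathcal L_k(w_k^s)+\mu(w_k^s-w^t)\Big),
\qquad \mathcal L_k(w)=\sum_{c=1}^C p_k(y=c)\,\mathbb E[\nabla l(f(w;x),c)\mid y=c],
\]
where the gradient of the empirical loss is written as a mixture over the $C$ classes weighted by the local label distribution. Subtracting the updates of $V$ and $A$ gives
\[
\nabla\mathcal L_V(w_V^s)-\nabla\mathcal L_A(w_A^s)=\big[\nabla\mathcal L_V(w_V^s)-\nabla\mathcal L_V(w_A^s)\big]+\big[\nabla\mathcal L_V(w_A^s)-\nabla\mathcal L_A(w_A^s)\big].
\]
\textbf{Step 2: bound each piece.} For the first bracket, apply the gradient Lipschitz assumption classwise, with constant $\gamma$ (identified with $\lambda$ per class). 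Combined with the $(1-\eta\mu)$ contraction from the proximal term, this yields the factor $b_V=1-\eta\mu+\eta\sum_c p_V(y=c)\gamma$ multiplying $\|w_V^s-w_A^s\|$. The second bracket equals $\sum_c (p_V(c)-p_A(c))\nabla_c(w_A^s)$. I plan to split it into two contributions. One is an accumulated gradient-magnitude term bounded by $G_{\max}(w_A^s)$, the largest class-conditional gradient norm at the attacker's iterate. The other is a data-distribution term of size $\eta\|\sum_c p_V(y=c)-p_A(y=c)\|$, which carries through to the statement after multiplication by $\iota$.

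\textbf{Step 3: unroll.} Iterating
\[
\|w_V^{s+1}-w_A^{s+1}\|\le b_V\|w_V^s-w_A^s\|+(\text{terms})
\]
over $s=mt-m,\dots,mt-1$ starts from the common initial point, so the initial gap vanishes. The geometric unrolling gives $\sum_{l=0}^{m-1} b_V^{\,l}\,G_{\max}(w_A^{mt-1-l})$, with the distribution mismatch collected into the first term. Multiplying by $\iota$ then gives the stated bound, where constant factors such as $\iota$ and $\eta$ in front of the sum are absorbed into $G_{\max}$.

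\textbf{Main obstacle.} The delicate step is Step 2's split of the cross term. To match the statement, the distribution difference must appear only once, in a term of size $\iota\eta\|\sum_c(p_V-p_A)\|$, rather than being multiplied into every summand of the geometric sum. My first attempt would be to bound the per-step mismatch by $\|\sum_c(p_V-p_A)\|\cdot G_{\max}$ and then accept a looser constant bookkeeping, as in~\cite{wei2020federated}. Here the distribution factor is kept at the last step only and the remaining steps are bounded crudely by $G_{\max}$, since $|p_V(c)-p_A(c)|\le 1$. If that fails to reproduce the exact form, I would state the result with the mismatch factor appearing inside the sum, which still exhibits the claimed dependence on local data distributions.
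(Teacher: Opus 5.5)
Your skeleton is the paper's proof step for step:
\begin{itemize}
\item the reduction to $\iota\|w_A-w_V\|$;
\item the class-mixture form of the local gradient;
\item adding and subtracting $\sum_c p_V(y=c)\,G_c(w_A^s)$, with $G_c$ the class-conditional gradient;
\item the $\gamma$-Lipschitz bound, which together with the proximal factor $(1-\eta\mu)$ produces $b_V$;
\item the $G_{max}$ bound on the mismatch;
\item the $m$-step unrolling from the common global model, so the initial gap vanishes.
\end{itemize}

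The step that fails is the one you flag as the main obstacle: turning the distribution mismatch into a separate additive term. Per step, the mismatch is bounded by a \emph{product}, $\eta\,\|\sum_c (p_V(y=c)-p_A(y=c))\,G_c(w_A^s)\|\le \eta\,D\,G_{max}(w_A^s)$, where $D=\sum_c|p_V(y=c)-p_A(y=c)|$. After unrolling, $D$ multiplies every summand, and the argument yields
\[
\|l(f(w_A;x'),y)-l(f(w_V;x'),y)\|\le \iota\eta\,D\sum_{l=0}^{m-1} b_V^{\,l}\,G_{max}(w_A^{mt-1-l}).
\]
None of your maneuvers converts this into $\iota\eta D+\sum_l b_V^{\,l} G_{max}(w_A^{mt-1-l})$:
\begin{itemize}
\item Keeping $D$ only at the last step still leaves the product $D\,G_{max}(w_A^{mt-1})$ there.
\item Bounding the other steps via $|p_V(y=c)-p_A(y=c)|\le 1$ leaves a prefactor $\iota\eta$ (times a constant) on the remaining sum, and that prefactor cannot be discarded.
\item Absorbing $\iota$ and $\eta$ into $G_{max}$ redefines a quantity the statement fixes as the maximal gradient norm.
\end{itemize}

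In fact the additive form is not implied by the stated assumptions. Take $m=1$, $w\in\mathbb{R}$, per-sample losses $\iota w$ for class $1$ and $-\iota w$ for class $2$, $p_A=(1,0)$, $p_V=(0,1)$, and a test loss $\iota w$ at $x'$. The loss gap is then $2\iota^2\eta$, while $\iota\eta D+G_{max}=2\iota\eta+\iota$. The latter is smaller once $\iota>1+1/(2\eta)$.

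The paper's own computation ends in exactly the product form: substitute Eq.~\eqref{eq:av_distance} into Eq.~\eqref{eq:tran_proof}. The ``$+$'' in Eq.~\eqref{eq:final} and in the theorem is not supported by that argument. So your fallback is the correct conclusion, and you should commit to it rather than to the splitting step.

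Also make the distribution factor explicitly the $\ell_1$ distance $D$. Read literally, $\sum_c\big(p_V(y=c)-p_A(y=c)\big)=1-1=0$, and the mismatch bound $\|\sum_c(p_V(y=c)-p_A(y=c))\,G_c(w_A^s)\|\le G_{max}(w_A^s)\,\|\sum_c(p_V(y=c)-p_A(y=c))\|$ would then be false.
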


This theorem reveals that the transferability of an adversarial example is mostly related to the data distribution difference between its source client (attacker) and the target client (victim).

\begin{proof}
	To accomplish this proof, some commonly used assumptions in optimization theory are needed, for more details, please refer to Section~\ref{sec:system}.
	
	According to the Lipschitz Continuity, we can derive 
	\begin{align}
		\label{eq:tran_proof}
		\|l(f(w_A;x'),y) - l(f(w_V;x'),y)\| \leq \iota \|w_A - w_V\|.
	\end{align}
	Therefore, finding the upper bound $\|l(f(w_A;x'),y) - l(f(w_V;x'),y)\|$ (\textit{i.e.,} the transferability between $w_A$ and $w_V$) is transformed to the upper bound of $\|w_A - w_V\|$.
	This distance $\|w_A - w_V\|$ between the attacker and victim may be changed in each round, and in most federated learning, including our target FedProx, the local clients train their local models (\textit{e.g.,} $w_A$ and $w_V$) separately in an iterative manner.
	Thus, without loss of generality, we consider a universal case where the global training has $T$ rounds and in each round the local client performs $m$ times iterative updates, which means a local client will have $mT$ times updates.
	
	For a local client $k$, according to the objective function defined in Eq.~\eqref{eq:local_obj}, the cross-entropy loss of the $m$-th update during the $t$-th round can be rewritten in the following format,
	\begin{align}
		\notag&F_k(w_k^{mt}; w^{t-1})\\ \notag
		=& \frac{1}{n_k} \sum_{i=1}^{n_k} [l(f(w_k^{mt};x_i),y_i)]+\frac{\mu}{2}\|w_k^{mt}-w^{t-1}\|^2, \\\notag
		=& \mathbb{E}_{(x_i,y_i) \in D_k}[l(f(w_k^{mt};x_i),y_i)]+\frac{\mu}{2}\|w_k^{mt}-w^{t-1}\|^2, \\\notag
		=& \mathbb{E}_{(x_i,y_i) \in D_k}[\sum_{c=1}^{C} \textcolor{black}{y_{c}} \log f_c(w_k^{mt};x_i)]+\frac{\mu}{2}\|w_k^{mt}-w^{t-1}\|^2,\\
		=& \sum_{c=1}^{C} p_k(y_i=c) \mathbb{E}_{x_i|y_i=c}[\log f_c(w_k^{mt};x_i)]+\frac{\mu}{2}\|w_k^{mt}-w^{t-1}\|^2.
	\end{align}
	where $\textcolor{black}{y_{c}}$ is the one-hot vector with 1 in the $c$-th position, and $p_k(y_i=c)$ is the data distribution in dataset $D_k$. 
	Based on this reformulated loss function, in the next iteration, a new local model $w_k^{mt+1}$ is updated as follows,
	\begin{align}
		w_k^{mt+1} = &w_k^{mt} - \eta \nabla F_k(w_k^{mt}; w^{t-1})
	\end{align} 
	where $\eta$ is the learning rate, and $\nabla F_k(w_k^{mt}; w^{t-1})$ is the gradient with respect to model parameters.
	We can calculate the gradient $\nabla F_k(w_k^{mt}; w^{t-1})$ in Eq.~\eqref{eq:local_g}
	\begin{align}
		\label{eq:local_g}
		\notag \nabla F_k(w_k^{mt}; w^{t-1}) =& \sum_{c=1}^{C} p_k(y_i=c) \nabla_w \mathbb{E}_{x_i|y_i=c}[\log f_c(w_k^{mt};x_i)] \\ &+\mu (w_k^{mt}-w^{t-1}).
	\end{align}
	For simplicity, hereafter we use $G(w_k^{mt})$ to represent the gradient term $\nabla_w \mathbb{E}_{x_i|y_i=c}[\log f_c(w_k^{mt};x_i)]$, and Eq.~\eqref{eq:local_g} is simplified to 
	\begin{align}
		\nabla F_k(w_k^{mt}; w^{t-1}) = \sum_{c=1}^{C} p_k(y_i=c)G(w_k^{mt}) +\mu (w_k^{mt}-w^{t-1}).
	\end{align}
	
	Now, we can compute the distance between the attacker's model $w_A$ and the victim's model $w_V$.
	In the $m$-th iteration of the $t$-th round, the distance is calculated as follows,
	{\small
    \begin{align}
		\label{eq:av_dis_1}
	\notag	& \|w_A^{mt} - w_V^{mt}\| = 
 \|(w_A^{mt-1} - \eta (\textcolor{black}{(\nabla F_A(w_A^{mt}; w^{t-1})})\\
 \notag &-  (w_V^{mt-1} - \eta (\textcolor{black}{(\nabla F_V(w_V^{mt}; w^{t-1})}) \| \\
	\notag =& \|w_A^{mt-1} - w_V^{mt-1} -\eta \sum_{c=1}^{C} p_A(y_i=c)G(w_A^{mt-1}) \\
	\notag		& + \sum_{c=1}^{C} p_V(y_i=c)G(w_V^{mt-1}) - \eta \mu (w_A^{mt-1} - w_V^{mt-1}) \| \\
	\notag =& \|(1-\eta \mu)(w_A^{mt-1} - w_V^{mt-1}) \\ 
	\notag	+& \eta (\sum_{c=1}^{C} p_V(y_i=c)G(w_V^{mt-1}) - \sum_{c=1}^{C} p_A(y_i=c)G(w_A^{mt-1}))  \| \\
	\notag \leq& \|(1-\eta \mu)(w_V^{mt-1} - w_A^{mt-1})\| \\
		    +& \eta \| (\sum_{c=1}^{C} p_V(y_i=c)G(w_V^{mt-1}) - \sum_{c=1}^{C} p_A(y_i=c)G(w_A^{mt-1})) \| 
	\end{align}}
	
The second term in Eq.~\eqref{eq:av_dis_1} can be rewritten as follows,
{\small
	\begin{align}
    \label{eq:dist_2}
	\notag	&\eta \| (\sum_{c=1}^{C} p_V(y_i=c)G(w_V^{mt-1})) - (\sum_{c=1}^{C} p_A(y_i=c)G(w_A^{mt-1})) \| \\ 
	\notag = & \eta \|(\sum_{c=1}^{C} p_V(y_i=c)G(w_V^{mt-1})) - (\sum_{c=1}^{C} p_V(y_i=c)G(w_A^{mt-1})) \\
	\notag  &+(\sum_{c=1}^{C} p_V(y_i=c)G(w_A^{mt-1})) - (\sum_{c=1}^{C} p_A(y_i=c)G(w_A^{mt-1})) \| \\
	\notag \leq& \eta \sum_{c=1}^{C} p_V(y_i=c) \|G(w_V^{mt-1}) - G(w_A^{mt-1}) \|	\\
	\notag  &+ \eta G(w_A^{mt-1}) \|\sum_{c=1}^{C} p_V(y_i=c) - p_A(y_i=c)\| \\
	\notag \overset{(i)}{\leq}& \eta \sum_{c=1}^{C} p_V(y_i=c) \gamma \|w_V^{mt-1} - w_A^{mt-1} \| \\
	\notag  &+ \eta G_{max}(w_A^{mt-1}) \|\sum_{c=1}^{C} p_V(y_i=c) - p_A(y_i=c)\| \\
	\end{align}}%
	The inequality ($i$) holds because of the Lipschitz Continuous Gradient condition of the loss function, and $G_{max}(w_A^{mt-1})$ is the maximal value of the gradient in $w_A^{mt-1}$.
	
	Therefore, combining the inequality Eq.~\eqref{eq:dist_2} and Eq.~\eqref{eq:av_dis_1}, we can have a one step upper bound for $\|w_A^{mt} - w_V^{mt}\|$ represented by $w_A^{mt-1}$ and $w_V^{mt-1}$ as shown in Eq.~\eqref{eq:av_dis_t_1}.
	\begin{align}
		\label{eq:av_dis_t_1}
		\notag	& \|w_A^{mt} - w_V^{mt}\| \\
		\notag \leq& (1-\eta \mu + \eta \sum_{c=1}^{C} p_V(y_i=c) \gamma) \|(w_V^{mt-1} - w_A^{mt-1})\| \\
		  &+ \eta G_{max}(w_A^{mt-1}) \|\sum_{c=1}^{C} p_V(y_i=c) - p_A(y_i=c)\| 
	\end{align}
	
	Based on this inequality, we can reduce to the general case of the distance between $w_A^{mt} $ and $ w_V^{mt}$.
	To ease the reading experience, we use $b_V$ to denote $(1-\eta \mu + \eta \sum_{c=1}^{C} p_V(y_i=c) \gamma)$ for simplicity.
	Then we will have the following induction.
    \begin{align}
		\label{eq:av_distance}
		\notag	& \|w_A^{mt} - w_V^{mt}\| \\
		\notag \leq& b_V \|(w_V^{mt-1} - w_A^{mt-1})\| + \\
        \notag & \eta G_{max}(w_A^{mt-1}) \|\sum_{c=1}^{C} p_V(y_i=c) - p_A(y_i=c)\| \\
		\notag \leq& (b_V)^2 \|(w_V^{mt-2} - w_A^{mt-2})\| + \\
        \notag & \eta G_{max}(w_A^{mt-1}) \|\sum_{c=1}^{C} p_V(y_i=c) - p_A(y_i=c)\| \\
		\notag &+ \eta b_V G_{max}(w_A^{mt-1}) \|\sum_{c=1}^{C} p_V(y_i=c) - p_A(y_i=c)\| \\
		\notag =& (b_V)^2 \|(w_V^{mt-2} - w_A^{mt-2})\| + \eta  \|\sum_{c=1}^{C} p_V(y_i=c) - p_A(y_i=c)\| \\ 
       \notag & \times (G_{max}(w_A^{mt-1}) + b_V G_{max}(w_A^{mt-2}) ) \\ 
		\notag & \dots \\
		\notag \leq& (b_V)^{m} \|(w_V^{mt-m} - w_A^{mt-m})\| + \\
		\notag & \eta  \|\sum_{c=1}^{C} p_V(y_i=c) - p_A(y_i=c)\| \sum_{l=0}^{m-1}((b_v)^{l} G_{max}(w_A^{mt-1-l})) \\
		\overset{(i)}{\leq}& \eta  \|\sum_{c=1}^{C} p_V(y_i=c) - p_A(y_i=c)\| \sum_{l=0}^{m-1}((b_v)^{l} G_{max}(w_A^{mt-1-l})) 
	\end{align}
	This final inequality ($i$) holds because the model parameters $w_V^{mt-m}:=w^{t-1}$ is the aggregated global parameter in the previous round (\textit{i.e.,} the ($t-1$)-th round) and $w_A^{mt-m}:=w^{t-1}$ is also the aggregated model parameter by definition.
	Thus, $ \|(w_V^{mt-m} - w_A^{mt-m})\|$ equals to $0$.
	
	With the upper bound of model parameters distance $\|w_A^{mt} - w_V^{mt}\|$, the transferability of adversarial example can be calculated by substituting Eq.~\eqref{eq:av_distance} into Eq.~\eqref{eq:tran_proof}.
	So, we can derive the transferability as follows
	\begin{align}
    \label{eq:final}
	\notag	&\|l(f(w_A;x'),y) - l(f(w_V;x'),y)\| \leq\\ 
		\iota \eta  & \|\sum_{c=1}^{C} p_V(y_i=c) - p_A(y_i=c)\|+ \sum_{l=0}^{m-1}((b_V)^{l} G_{max}(w_A^{mt-1-l})).
	\end{align}
	This finishes the proof of Theorem~\ref{tm:transfer}.
\end{proof}

This theorem confirms that there is a positive relation between the transferability of adversarial examples and the distribution differences between the victim and the attacker client.
That is, when the data distribution of the attacker is similar to that of the victim, the crafted adversarial examples based on the source model of the attacker will be more transferable and form a successful attack.
We conduct a series of experiments on the transferability of adversarial examples to catch the relation in Section~\ref{sec:exp}, and the experimental results are in compliance with our analysis in Theorem~\ref{tm:transfer}.

\section{Methodology: Robust Defense in FL}
\label{sec:methodology}

\begin{figure}[tbp]
    \centering
    \includegraphics[width=\linewidth]{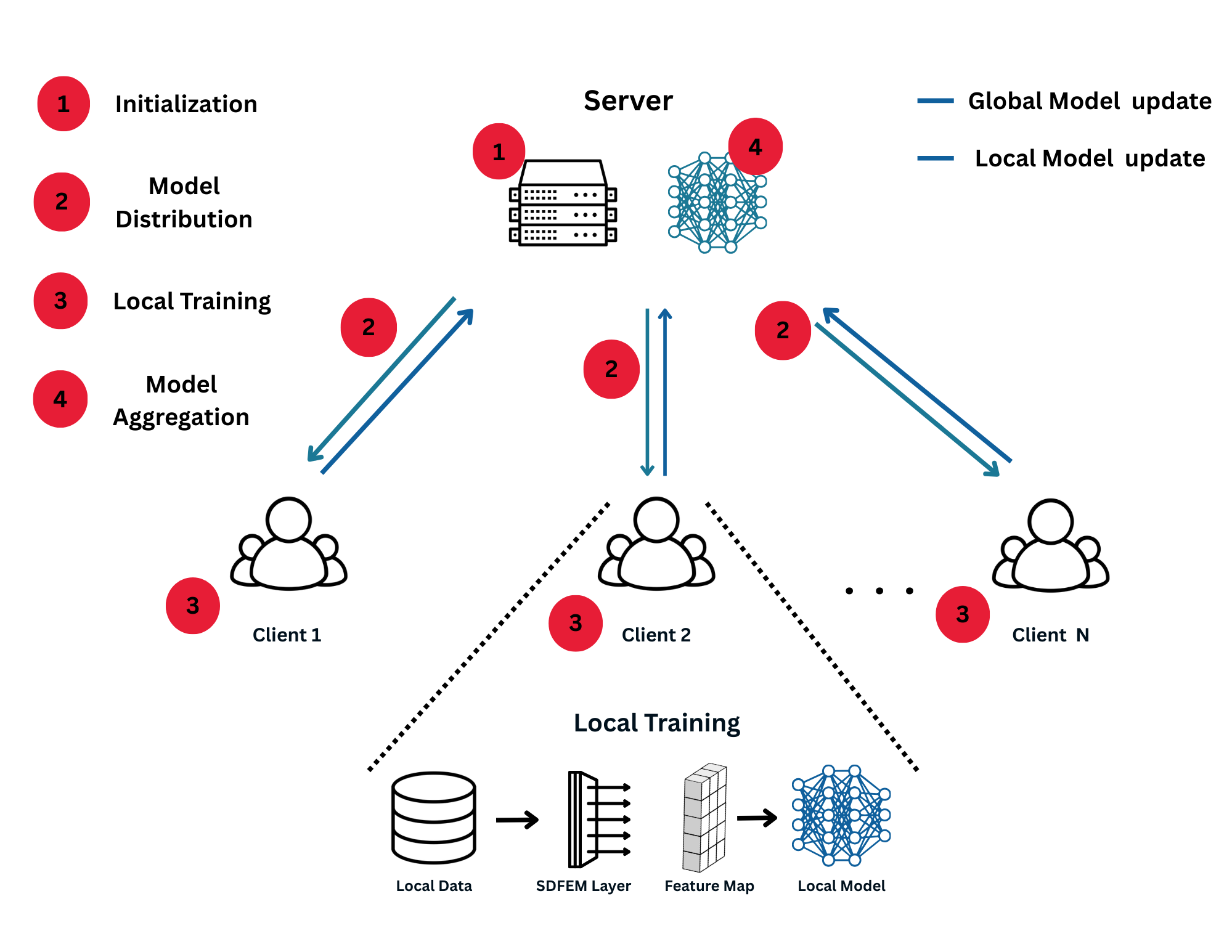}
    \caption{Overview of the Federated Learning Framework used in this study.}
    \label{fig:placeholder}
\end{figure}


Motivated by our theoretical analysis in Section ~\ref{sec:attack}, which reveals that the transferability of adversarial examples stems from similarities from the data-distribution and the weights of the attacker model and the victim model. So, to mitigate the transferability of the samples from attacker we need to make the victim model learn more robust and unique features. 
To employ our solution we used SVD-driven Feature Evolution Module (SDFEM) and Adversarial Training alongside it to force the model to learn robust features. It gave great results in centralized settings by Liu et al.~\cite{Liu2025CVPR}.

\subsection{SDFEM for Feature Robustness}
Inspired by Liu et al.~\cite{Liu2025CVPR}, the Singular Value Decomposition (SVD) method can be applied to the input of models.  
Its primary goal is to suppress feature channels that are highly sensitive to adversarial perturbations while retaining robust semantic features.
The SDFEM module is deployed at the input layer of the client's local model without changing the original model parameters, which can serve as a plug-in kit.

Let $x \in \mathbb{R}^{h \times w}$ represent an input image sample. 
The module first decomposes the feature map using SVD:
\begin{equation}
 x = U \Sigma V^T,
\end{equation}
where $U$ and $V$ are orthogonal matrices, and $\Sigma$ is a diagonal matrix containing the singular values $\sigma_i$ which represent the energy of feature channels.

\begin{algorithm}[t]
\caption{Robust FedProx with SDFEM}
\label{alg:robust_fed}
\SetAlgoLined
\textbf{Server executes:}\\
Initialize $w^0$, global mask $M$ \\
\For{round $t = 0, 1, \dots, T-1$}{
    Select subset of clients $S_t \subseteq \{1, \dots, K\}$\\
    Broadcast $w^t$ to all $k \in S_t$\\
    \For{each client $k \in S_t$}{
        $w_k^{t+1}, M_k^{t+1} \leftarrow \text{ClientUpdate}(k, w^t)$
    }
    $w^{t+1} \leftarrow \sum_{k \in S_t} p_k w_k^{t+1}$
}
\textbf{ClientUpdate($k, w^t$):}\\
Generate $D_k^{adv}$ using local surrogate\\
Initialize $w_k \leftarrow w^t$\\
\For{local epoch $e = 1 \dots E$}{
    Update $w_k, M_k$ to minimize Eq.~\eqref{eq:robust_local_obj}
}
\Return $w_k, M_k$
\end{algorithm}

To distinguish between robust and non-robust features, we introduce a learnable evolution mask $M \in \mathbb{R}^{h \times w}$. The input is reconstructed by re-weighting the singular values via element-wise multiplication with this mask:
\begin{equation}
    \text{SDFEM}(x, M) = U (\Sigma \odot M) V^T,
\end{equation}
where $\odot$ denotes the Hadamard product. 
By learning $M$ during training, the model can automatically attenuate singular values that contribute to the gradients exploited by transfer-based attacks, effectively lowering the gradient maximum term $G_{max}$ discussed in Theorem~\ref{tm:transfer}.

\subsection{Adversarial Training in Federated Aggregation}
The SDFEM method can only protect the local model, and generate robust local features and parameters.
To ensure the global model aggregates robust features, the federated learning optimization problem needs to be modified as well.
Adversarial training is adopted in the robust federated aggregation.
The local models from SDFEM can propagate their robustness to other clients and learn from each other to increase the global robustness against unseen data~\cite{xiong2021privacy}.
The major steps for adversarial training in FedProx are as follows.

\textbf{Adversarial Data Generation.}
In each round, client $k$ utilizes a local surrogate model to generate a batch of adversarial examples $D_k^{adv} = \{(x'_i, y_i)\}$ from the local clean data $D_k$. 
These examples are generated using PGD to maximize the local loss, serving as proxies for transferable attacks.

\begin{figure*}[htbp]
	\centering
	\subfloat[SVHN ASR]{\includegraphics[width=0.25\linewidth]{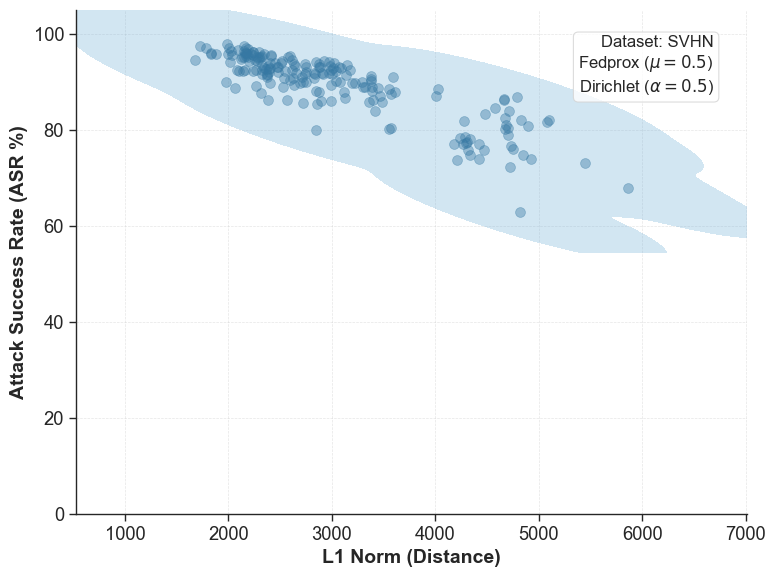}\label{fig:svhn_fedprox}}
	\hfil
	\subfloat[Cifar10 ASR]{\includegraphics[width=0.25\linewidth]{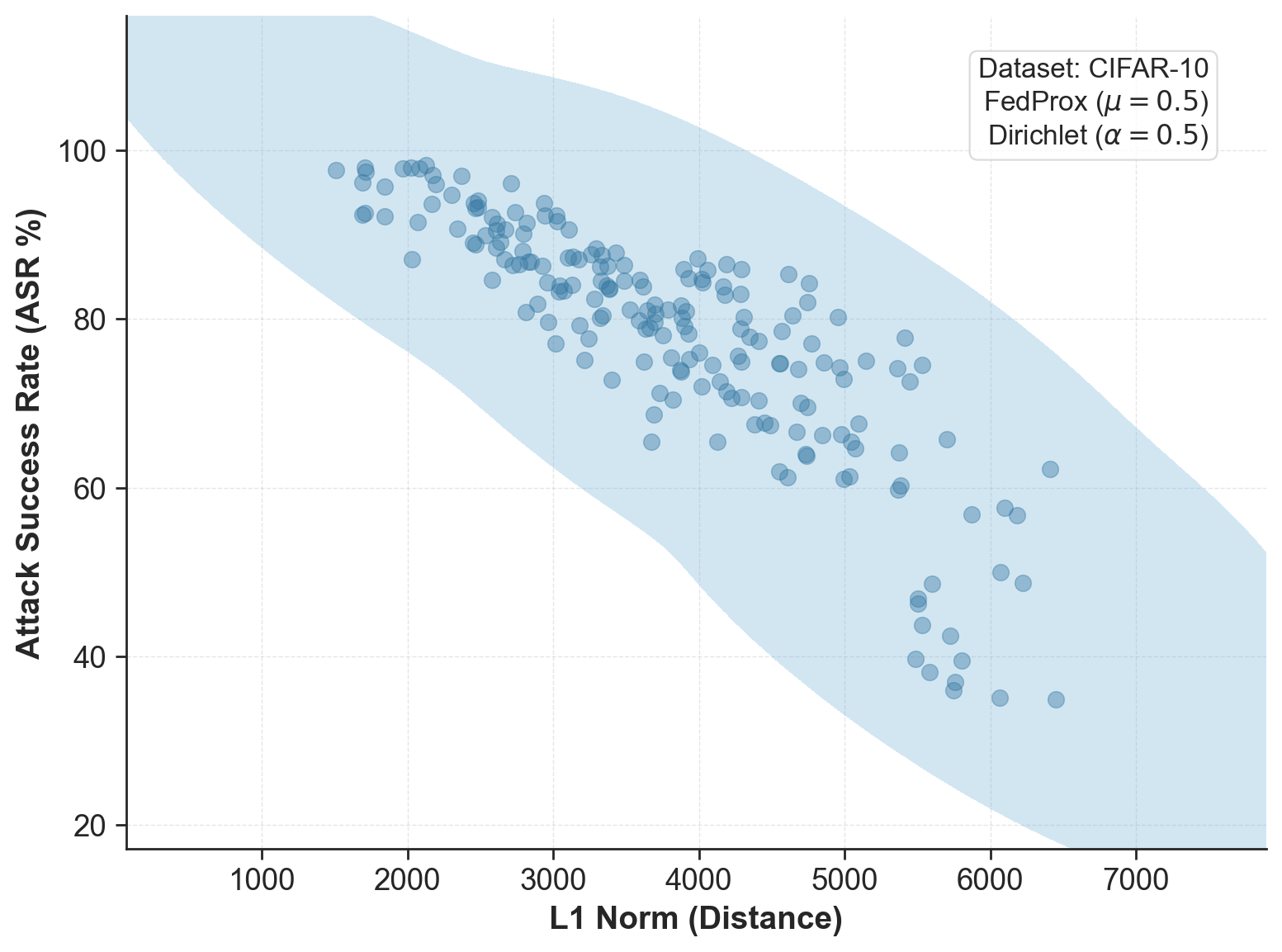}\label{fig:cifar10_fedprox}}
	\hfil
	\subfloat[SVHN Transferability]{\includegraphics[width=0.25\linewidth]{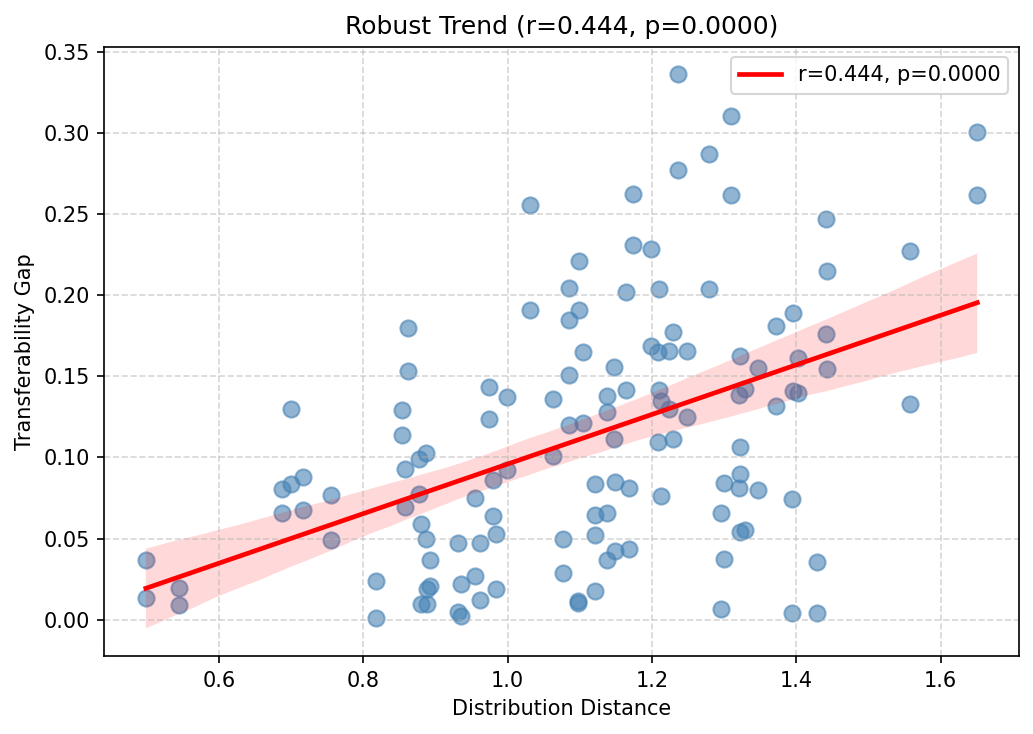}\label{fig:svhn_transfer}}
	\hfil
	\subfloat[Cifar10 Transferability]{\includegraphics[width=0.25\linewidth]{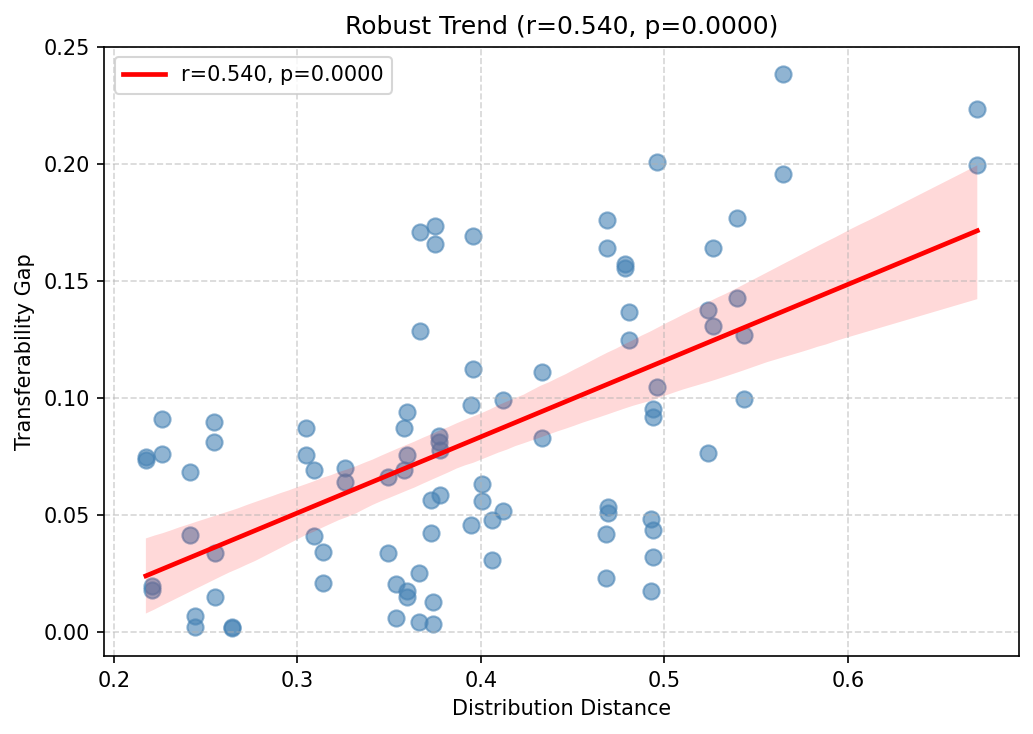}\label{fig:cifar10_transfer}}
	\caption{Correlation between Model Distance and Attack Success Rate (Fig.~\ref{fig:svhn_fedprox},~\ref{fig:cifar10_fedprox}), and between Distribution Distance and Attack Transferability (Fig.~\ref{fig:svhn_transfer},~\ref{fig:cifar10_transfer}) across different datasets.}
	\label{fig:correlation_all}
\end{figure*}

\textbf{Modified Local Objective.}
The local objective function $F_k(w; w^t)$ is updated to jointly optimize the model weights $w_k$ and the feature evolution mask $M_k$. 
The robust client minimizes the objective function in Eq.~\eqref{eq:robust_local_obj}.
\begin{align}
    \label{eq:robust_local_obj}
    \min_{w_k, M_k} \quad & \alpha \mathbb{E}_{(x,y) \in D_k} [l(f(w_k; \text{SDFEM}(x, M_k)), y)] \nonumber \\
    + & (1-\alpha) \mathbb{E}_{(x',y) \in D_k^{adv}} [l(f(w_k; \text{SDFEM}(x', M_k)), y)] \nonumber \\
    + & \frac{\mu}{2} \|w_k - w^t\|^2,
\end{align}
where $\alpha$ is a hyperparameter balancing clean accuracy and robustness. 
The full training procedure for our proposed method is in Algorithm~\ref{alg:robust_fed}.

\section{Experiments}
\label{sec:exp}

In this section, we aim to answer the following questions via experiments: 
(1) Do our experimental findings align with the theoretical correlation between data distribution and transferability derived in Theorem \ref{tm:transfer}? 
(2) Does the proposed robust mechanism effectively mitigate transferable adversarial attacks in a federated setting?

\subsection{Experimental Setup}

\subsubsection{Datasets and Models}
We utilize two benchmark image classification datasets: \textbf{CIFAR-10} and \textbf{SVHN}. 
For the model architectures, we employ \textbf{VGG11} and WideResNet-28-10 \textbf{(WRN28)} to represent standard and high-capacity deep learning models, respectively.

\subsubsection{Federated Learning Settings}
We simulate a federated learning environment with $K\in \{10,100\}$ clients. 
In each communication round, a fraction $C=\{0.8,1\}$ of clients are randomly selected to participate.
To simulate realistic statistical heterogeneity, we partition the data among clients using two settings:
\begin{itemize}
\item \textbf{IID Setting:} Data is shuffled and uniformly distributed across all clients.
\item \textbf{Non-IID Setting:} We sample class labels from a Dirichlet distribution with concentration parameter $\alpha\in\{0.1,0.5,1\}$, where a smaller $\alpha$ indicates higher data skewness.
\end{itemize}
The global model is trained for $T=400$ rounds. For local updates, we use an SGD optimizer with a learning rate of $\eta=0.01$, momentum $0.9$, and local epoch $E=1$ for ideal IID settings and $E=3$ for Non-IID settings.

\subsubsection{Attack and Defense Implementation}
For the attack generation, we assume a gray-box threat model where the adversary uses their local surrogate model to generate transfer-based attacks. 
We evaluate against four state-of-the-art gradient-based attacks: \textbf{VMIFGSM}, \textbf{BIM}, \textbf{VNIFGSM}, \textbf{DIFGSM}.
The perturbation budget is set to $\epsilon=8/255$ with step size $2$ and number of steps $100$.
For our defense in adversarial training, the balance hyperparameter $\alpha$ in Eq. \eqref{eq:robust_local_obj} is set to $0.5$.

\subsection{RQ1: Verification of Theoretical Analysis}
\label{sec:exp_theory}

A core contribution of this paper is Theorem \ref{tm:transfer}, which posits that the transferability of adversarial examples is bounded by the model parameter difference between the attacker and victim (see Eq.~\eqref{eq:tran_proof}) as well as the distributional distance (see Eq.~\eqref{eq:final}).
To verify this, we conducted experiments calculating the trend in attack success rate (ASR) and the change of model prediction loss, which aligns with our analysis.
The results are visualized in Fig.~\ref{fig:correlation_all}.
It is obviously observed in Fig.~\ref{fig:svhn_fedprox},~\ref{fig:cifar10_fedprox}  that when the model parameter distance $\|w_V - w_A\|$ is increased, the adversarial attack ASR is reduced drastically, reflecting a negative correlation.
In Fig.~\ref{fig:svhn_transfer},~\ref{fig:cifar10_transfer}, the distribution distance is measured by the L1 norm of their class distribution difference $\|p_V - p_A\|$.
As displayed in the figures, although not a linear relation, the attack transferability gap is increased as the distribution distance gets large.
This positive correlation is measured in Spearman's coefficient, with $r$ values over 0.4 and 0.5, $p\approx0$ for both datasets, which means statistically significant correlation.

In summary, the empirical results are well-aligned with our analysis, confirming that the transferability of attacks and model/distribution difference are positively correlated.


\subsection{RQ2: Defense Performance Analysis}

We compare the proposed defense mechanism with baselines under four scenarios (IID vs.\ Non-IID on CIFAR-10 and SVHN). Results are reported in Tables~\ref{tab:results_cifar_iid}--\ref{tab:svhn_non}.

\textbf{Performance on IID Data.}
Tables~\ref{tab:results_cifar_iid} and~\ref{tab:svhn_iid} show that standard federated training without adversarial robustness (FedProx) is extremely vulnerable to transfer-based attacks: although clean accuracy remains high (e.g., 88--98\%), the accuracy on adversarial examples collapses to nearly zero (often $<1\%$) across all attack variants.
In contrast, both FAT and our method substantially improve robustness, and our method consistently provides additional gains over FAT while maintaining comparable clean accuracy.
Specifically, on CIFAR-10 in Table~\ref{tab:results_cifar_iid}, our method improves transfer robustness over FAT for both backbones.
For VGG11, ours increases robust accuracy by +1.69 (VMIFGSM), +1.17 (BIM), +1.79 (VNIFGSM), and +1.86 (DIFGSM) points, with only a minor clean-accuracy change (92.62 vs.\ 93.53).
For WRN28, ours yields larger gains, improving robustness by +3.36 (VMIFGSM), +2.04 (BIM), +3.21 (VNIFGSM), and +4.28 (DIFGSM) points while preserving clean accuracy (86.40 vs.\ 85.97).
Notably, our method is competitive with the centralized upper bound under transfer attacks for WRN28 (e.g., 83.93 vs.\ 83.79 on VMIFGSM), indicating that the proposed defense can approach centralized robustness even under federated constraints.
Table \ref{tab:svhn_iid} demonstrates the robustness of our method under IID settings on the SVHN dataset. 
Compared with the standard FedProx models (in VGG11 and WRN28), which show extreme vulnerability to transfer attacks, our defense method maintains robust accuracy on adversarial examples. 
For instance, in the VGG11 model, our method achieves an average robustness improvement of roughly $60\%$ and $6\%$ in BIM attack compared to FedProx and FAT, respectively.

\begin{table}[t]
\centering
\caption{Comparison of defense accuracy (\%) against various transfer-based adversarial attacks on \textbf{CIFAR-10 (IID)}.}
\label{tab:results_cifar_iid}
\resizebox{\columnwidth}{!}{%
\begin{tabular}{l|c|cccc}
\toprule
\textbf{Method} & \textbf{Clean} & \textbf{VMIFGSM} & \textbf{BIM} & \textbf{VNIFGSM} & \textbf{DIFGSM} \\
\midrule
\multicolumn{6}{c}{\textbf{VGG11}} \\
\midrule
Standard FedProx & 94.49 & $<1$ & $<1$ &$<1$ &$<1$ \\
FAT (Baseline) & 93.53 & 81.57 & 84.22 & 81.49 & 78.01 \\
\textbf{Ours} & \textbf{92.62} & \textbf{83.26} & \textbf{85.39} & \textbf{83.28} & \textbf{79.87}\\
Centralized (Upper Bound)& 95.68 & 94.57 & 93.78 & 93.43 & 94.96 \\
\midrule
\multicolumn{6}{c}{\textbf{WRN28}} \\
\midrule
Standard FedProx & 88.67 & $<1$ & $<1$ & $<1$ &  $<1$\\
FAT (Baseline) & 85.97 & 80.57 & 80.94 & 80.59 & 78.22 \\
\textbf{Ours} & \textbf{86.4} & \textbf{83.93} & \textbf{82.98} & \textbf{83.80} & \textbf{82.50}\\
Centralized (Upper Bound) & 88.76 & 83.79 & 84.08 & 83.78 & 80.78 \\
\bottomrule
\end{tabular}%
}
\end{table}

\begin{table}[t]
\centering
\caption{Comparison of defense accuracy (\%) against various transfer-based adversarial attacks on \textbf{SVHN (IID)}.}
\label{tab:svhn_iid}
\resizebox{\columnwidth}{!}{%
\begin{tabular}{l|c|cccc}
\toprule
\textbf{Method} & \textbf{Clean} & \textbf{VMIFGSM} & \textbf{BIM} & \textbf{VNIFGSM} & \textbf{DIFGSM} \\
\midrule
\multicolumn{6}{c}{\textbf{VGG11}} \\
\midrule
Standard FedProx & 97.78 & $<1$ & $<1$ & $<1$ & $<1$ \\
FAT (Baseline) & 95.69 & 43.13 & 54.13 & 42.93 & 43.05\\
\textbf{Ours} & \textbf{95.67} & \textbf{48.71} & \textbf{61.55} & \textbf{48.59} & \textbf{45.55} \\
Centralized (Upper Bound) & 98.33 & 90.20 & 93.44 & 89.86 & 84.75\\
\midrule
\multicolumn{6}{c}{\textbf{WRN28}} \\
\midrule
Standard FedProx & 96.50 & $<1$ & $<1$ &$<1$ & $<1$ \\
FAT (Baseline) & 95.69 & 54.90 & 60.43 & 55.11 & 54.93 \\
\textbf{Ours} & \textbf{95.50} & \textbf{56.48} & \textbf{63.95} & \textbf{56.90} & \textbf{57.67} \\
Centralized (Upper Bound) & 96.63 & 87.57 & 90.78 & 88.43 & 87.96 \\
\bottomrule
\end{tabular}%
}
\end{table}

\textbf{Performance on Non-IID Data.}
Tables~\ref{tab:results_cifar_non} and~\ref{tab:svhn_non} focus on the more realistic federated setting with heterogeneous client distributions.
Consistent with Theorem~\ref{tm:transfer}, transfer-based attacks tend to be slightly less effective under Non-IID partitions for non-robust baselines, due to larger distribution shifts between attacker and victim.
Nevertheless, the accuracy on adversarial examples remains significantly low ($<1\%$) without explicit defenses, whereas our method continues to outperform the baseline FAT across datasets.
For example, in Table~\ref{tab:results_cifar_non}, our method yields clear robustness gains over baselines.
For VGG11, it improves robustness over FAT by +5.06 (VMIFGSM), +4.59 (BIM), +4.97 (VNIFGSM), and +4.75 (DIFGSM) points.
For WRN28, ours improves both clean accuracy (+2.99 points) and robustness substantially, with +9.26, +9.07, +9.26, and +8.43 point gains under VMIFGSM/BIM/VNIFGSM/DIFGSM, respectively.
These results highlight that our method remains effective and can even be more beneficial under pronounced data heterogeneity.
Similar results can be observed on the SVHN non-i.i.d. scenario in Table~\ref{tab:svhn_non}.



\begin{table}[t]
\centering
\caption{Comparison of defense accuracy (\%) against various transfer-based adversarial attacks on \textbf{CIFAR-10 (Non-IID)}.}
\label{tab:results_cifar_non}
\resizebox{\columnwidth}{!}{%
\begin{tabular}{l|c|cccc}
\toprule
\textbf{Method} & \textbf{Clean} & \textbf{VMIFGSM} & \textbf{BIM} & \textbf{VNIFGSM} & \textbf{DIFGSM} \\
\midrule
\multicolumn{6}{c}{\textbf{VGG11 }} \\
\midrule
Standard FedProx &  87.01 & $<1$ & $<1$ & $<1$& $<1$ \\
FAT (Baseline) & 86.14 & 47.52 & 49.00 & 47.55 & 45.14\\
\textbf{Ours} & \textbf{83.56} & \textbf{52.58} & \textbf{53.59} & \textbf{52.52} & \textbf{49.89} \\
\midrule
\multicolumn{6}{c}{\textbf{WRN28}} \\
\midrule
Standard FedProx & 75.56 & $<1$ & $<1$ & $<1$ & $<1$ \\
FAT (Baseline) & 74.5 & 47.47 & 47.38 & 47.50 & 45.68 \\
\textbf{Ours} & \textbf{77.49} & \textbf{56.73} & \textbf{56.45} & \textbf{56.76} & \textbf{54.11} \\
\bottomrule
\end{tabular}%
}
\end{table}

\begin{table}[t]
\centering
\caption{Comparison of defense accuracy (\%) against various transfer-based adversarial attacks on \textbf{SVHN (Non-IID)}.}
\label{tab:svhn_non}
\resizebox{\columnwidth}{!}{%
\begin{tabular}{l|c|cccc}
\toprule
\textbf{Method} & \textbf{Clean} & \textbf{VMIFGSM} & \textbf{BIM} & \textbf{VNIFGSM} & \textbf{DIFGSM} \\
\midrule
\multicolumn{6}{c}{\textbf{VGG11}} \\
\midrule
Standard FedProx & 95.34 & $<1$ & $<1$ & $<1$ & $<1$ \\
FAT (Baseline) & 93.56 & 27.64 & 32.90 & 27.42 & 27.38\\
\textbf{Ours} & \textbf{94.05} & \textbf{28.95} & \textbf{36.30} & \textbf{28.75} & \textbf{28.48}\\
\midrule
\multicolumn{6}{c}{\textbf{WRN28}} \\
\midrule
Standard FedProx & 92.45 & $<1$ & $<1$ & $<1$ & $<1$ \\
FAT (Baseline) & 90.75 & 23.47 & 25.70 & 23.56 & 23.22 \\
\textbf{Ours} & \textbf{92.00} & \textbf{25.20} & \textbf{28.82} & \textbf{24.99} & \textbf{25.73} \\
\bottomrule
\end{tabular}%
}
\end{table}

\section{Conclusion}
\label{sec:con}


In this work, we studied transferable adversarial example attacks in federated learning and revealed how their effectiveness is fundamentally tied to the data distribution shift across clients. 
We provided a theoretical analysis that characterizes the vulnerability of federated systems under heterogeneous data. 
Building on these insights, we proposed a defense mechanism that leverages the SVD and robustness propagation.
Extensive experiments on real-world datasets demonstrate our method can improve robust accuracy while maintaining competitive clean performance, outperforming strong federated baselines.
Overall, our findings highlight transferability as a critical security dimension in federated learning and suggest that robustness-aware defenses can substantially strengthen FL in practical environments. 

\section*{Acknowledgment}

This work was supported by the National Science Foundation under grants No. 2429960, No. 2434899, No. 2548041, and No. 2343619.

\bibliographystyle{IEEEtran}
\bibliography{CIKM2022}
\end{document}